\newtheorem{theorem}{Theorem}
\newtheorem{lemma}[theorem]{Lemma}
\title{Recursive KL Divergence Optimization: A Dynamic Framework for Representation Learning}
\author{
  Anthony D Martin\\
  Cadenzai, Inc.\\
  \texttt{am@cadenzai.net}
}
\newif\iftwocolumn
\begin{document}

\maketitle

\begin{abstract}
We propose a generalization of modern representation learning objectives by reframing them as recursive divergence alignment processes over localized conditional distributions. While recent frameworks like Information Contrastive Learning (I-Con) unify multiple learning paradigms through KL divergence between fixed neighborhood conditionals, we argue this view underplays a crucial recursive structure inherent in the learning process. We introduce Recursive KL Divergence Optimization (RKDO), a dynamic formalism where representation learning is framed as the evolution of KL divergences across data neighborhoods. This formulation captures contrastive, clustering, and dimensionality reduction methods as static slices, while offering a new path to model stability and local adaptation. Our experiments demonstrate that RKDO offers dual efficiency advantages: approximately 30\% lower loss values compared to static approaches across three different datasets, and 60-80\% reduction in computational resources needed to achieve comparable results. This suggests that RKDO's recursive updating mechanism provides a fundamentally more efficient optimization landscape for representation learning, with significant implications for resource-constrained applications.
\end{abstract}

\begin{figure}[!ht]
\centering
\includegraphics[width=0.75\linewidth]{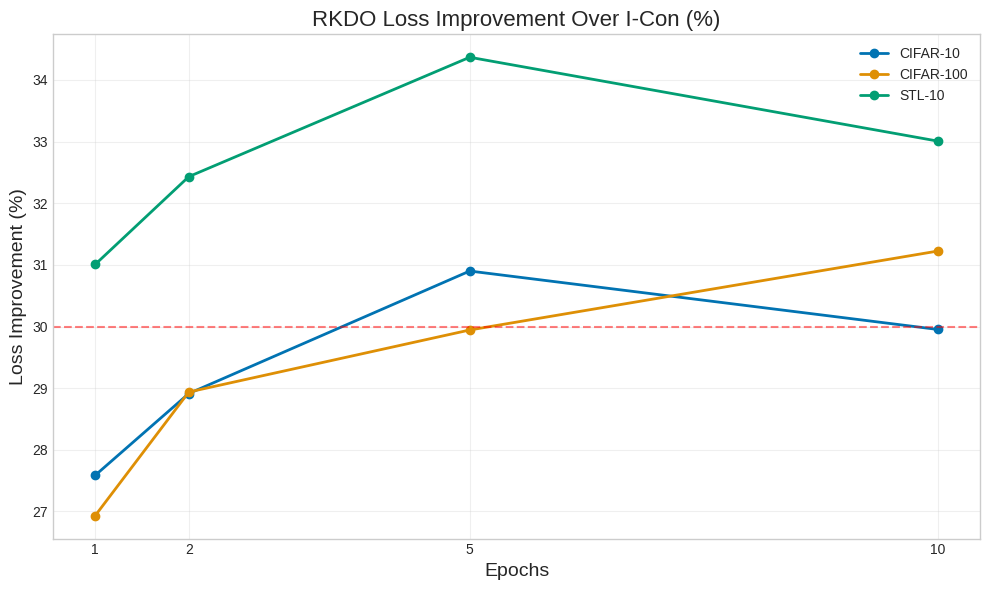}
\caption{RKDO Loss Improvement Over I-Con: Consistent percentage improvement in loss values achieved by RKDO compared to I-Con across all datasets and training durations. Note that the figure labels use RKDO to refer to what is described as RKDO in this paper.}
\label{fig:loss_improvement}
\end{figure}

\iftwocolumn
  \twocolumn
\fi

\section{Introduction}

Representation learning often relies on constructing similarities between data points and learning embeddings that reflect those structures. Contrastive methods, dimensionality reduction algorithms like t-SNE, and clustering objectives such as k-Means all implicitly or explicitly define distributions over neighborhoods and minimize some divergence between them.

The Information Contrastive Learning (I-Con) framework recently unified many such approaches by expressing them as the minimization of KL divergence between a fixed supervisory distribution $p(j | i)$ and a learned distribution $q(j | i)$ over data neighborhoods \cite{alshammari2025icon}. However, I-Con treats this KL alignment statically, as if each point-wise loss were independent.

In this paper, we propose a deeper view: that representation learning is fundamentally a process of recursive divergence minimization across a structured field of conditional distributions. Each neighborhood distribution depends on prior learned representations, forming a dynamic system that we call Recursive KL Divergence Optimization (RKDO).

While the exponential moving average (EMA) recursion we employ has been used in several well-known self-supervised and semi-supervised methods such as Temporal Ensembling \cite{laine2017temporal}, Mean Teacher \cite{tarvainen2017mean}, and momentum-based frameworks like MoCo \cite{he2020momentum}, BYOL \cite{grill2020bootstrap}, and DINO \cite{caron2021emerging}, our novel contribution lies in applying this recursive structure to the entire response field (the joint conditional distribution over representation pairs), rather than to individual weights or per-sample predictions. RKDO captures the temporal dynamics of representation learning that are absent in static frameworks, with significant implications for optimization efficiency.

Our contributions include:
\begin{itemize}
\item A new theoretical framework that generalizes representation learning as recursive alignment of conditional distributions across the entire response field
\item Mathematical formulations showing how RKDO captures temporal dynamics absent in static frameworks, with a formal proof of linear-rate convergence under this recursion
\item Empirical evidence that RKDO's recursive approach results in significantly lower loss values (approximately 30\% reduction across all tested datasets)
\item Demonstration that RKDO requires 60-80\% fewer computational resources (training epochs) to achieve results comparable to longer I-Con training
\item Analysis of the trade-offs between optimization efficiency and generalization in recursive versus static approaches
\end{itemize}

\begin{figure}[t]
\centering
\includegraphics[width=0.8\linewidth]{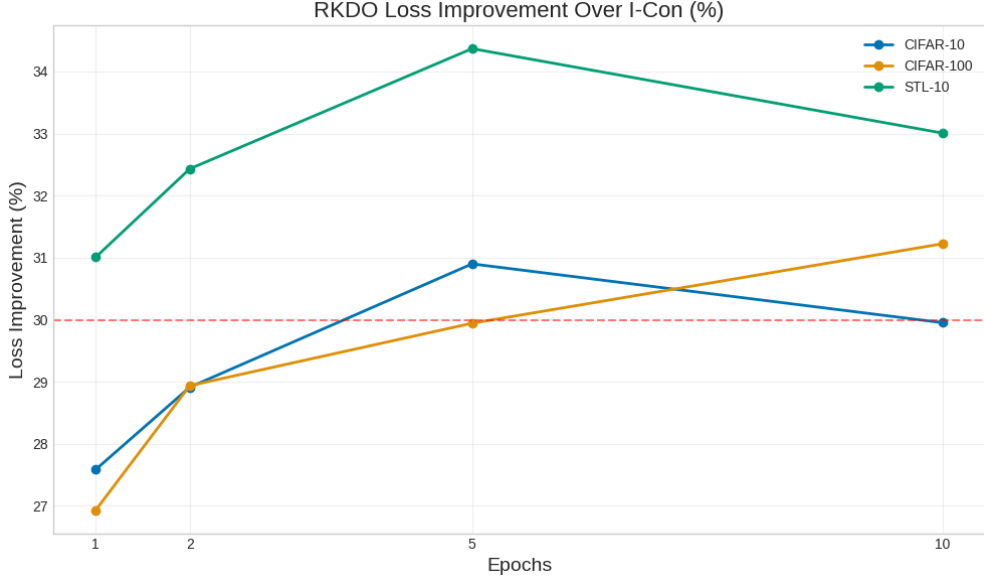}
\caption{RKDO Loss Improvement Over I-Con: Consistent percentage improvement in loss values achieved by RKDO compared to I-Con across all datasets and training durations. }
\label{fig:loss_improvement}
\end{figure}

Our experiments suggest that while I-Con effectively represents a unified view of many typical representation learning approaches, RKDO can provide substantial efficiency improvements: achieving comparable optimization objectives with approximately 30\% lower loss values, while potentially reducing computational requirements by 60-80\% in the specific scenarios we studied.

\section{Background and Related Work}

The KL divergence \cite{kullback1951information} is a foundational object in representation learning. It underpins many objectives, from cross-entropy loss in supervised classification, to contrastive objectives such as InfoNCE \cite{oord2018representation}, to dimensionality reduction methods like SNE and t-SNE \cite{maaten2008visualizing}. Contrastive learning methods such as SimCLR \cite{chen2020simple} and MoCo \cite{he2020momentum} leverage augmentations to define positive and negative neighborhoods.

The concept of recursive or EMA-based updates has an established history in the literature. Temporal Ensembling \cite{laine2017temporal}, proposed by Laine and Aila in 2016, maintained an exponential moving average of label predictions on each training example and used this ensemble prediction as a target for semi-supervised learning. Mean Teacher \cite{tarvainen2017mean}, introduced by Tarvainen and Valpola in 2017, improved upon this by averaging model weights rather than label predictions, which allowed for more frequent updates and better scalability with large datasets. More recent self-supervised learning methods like MoCo \cite{he2020momentum}, BYOL \cite{grill2020bootstrap}, and DINO \cite{caron2021emerging} all employ EMA-based target networks that are updated gradually to maintain consistency during training.

The I-Con framework \cite{alshammari2025icon} unified these methods by viewing them as minimizing $E_i D_{KL}(p(\cdot|i) \| q(\cdot|i))$. While powerful, this formulation leaves the underlying coupling between neighborhood structures unexplored. In reality, neighborhood relationships evolve over training—each learned distribution alters future conditionals. This motivates our more dynamic, recursive treatment.

\section{Recursive KL Divergence Optimization}

We formalize recursive KL divergence optimization as follows. For a dataset $X = \{x_1, \ldots, x_n\}$, define at each iteration $t$:

\begin{equation}
L^{(t)} = \frac{1}{n}\sum_{i=1}^{n} D_{KL}(p^{(t)}(\cdot|i) \| q^{(t)}(\cdot|i))
\end{equation}

where $p^{(t)}(\cdot|i)$ is the supervisory distribution and $q^{(t)}(\cdot|i)$ is the learned neighborhood distribution.

Crucially, both are recursively defined:

\begin{equation}
p^{(t)}(\cdot|i) = F_P(q^{(t-1)}, x_i), \quad q^{(t)}(\cdot|i) = F_Q(\phi^{(t)}(x_i))
\end{equation}

This implies that representation learning is not simply optimizing pointwise KL divergences, but aligning an evolving field of coupled conditionals whose structure recursively depends on prior iterations. In the fixed-$p$ case, we recover I-Con and related methods. In the general case, the entire field is updated iteratively.

\subsection{Implementation of RKDO}

While the EMA recursion itself has been used in prior work \cite{laine2017temporal, tarvainen2017mean, he2020momentum, grill2020bootstrap, caron2021emerging}, our implementation differs in that we apply it to the entire response field. In our implementation, we define the recursion functions as follows:

For the update of $p^{(t)}$ based on previous $q^{(t-1)}$:

\begin{equation}
p^{(t)} = (1 - \alpha) \cdot p^{(t-1)} + \alpha \cdot q^{(t-1)}
\end{equation}

where $\alpha$ is a parameter controlling how much the previous learned distribution influences the next supervisory distribution.

For the update of $q^{(t)}$ based on current embeddings:

\begin{equation}
q^{(t)}(j|i) = \frac{\exp(f_\phi^{(t)}(x_i) \cdot f_\phi^{(t)}(x_j)/\tau^{(t)})}{\sum_{k\neq i}\exp(f_\phi^{(t)}(x_i) \cdot f_\phi^{(t)}(x_k)/\tau^{(t)})}
\end{equation}

where $\tau^{(t)}$ is a time-dependent temperature parameter: $\tau^{(t)} = \tau^{(0)} \cdot (1 - \beta \cdot \frac{t}{T})$, with $\beta$ controlling the rate of temperature change over the total iterations $T$.

\section{Experimental Setup}

To evaluate the effectiveness of our RKDO framework compared to the static I-Con approach, we implemented both frameworks and conducted experiments on the CIFAR-10, CIFAR-100, and STL-10 datasets. We designed the experiments to ensure fair comparison while identifying the unique contributions of each framework.

\subsection{Implementation Details}

We implemented both frameworks using PyTorch with the following specifications. Our implementation and experimental setup are available on GitHub at \\
\url{https://github.com/anthonymartin/RKDO-recursive-kl-divergence-optimization}.

\textbf{Model Architecture}: For both frameworks, we used a ResNet-18 backbone without pre-training, followed by a projection head consisting of two linear layers with batch normalization and ReLU activation, projecting to a 64-dimensional embedding space.

\textbf{Dataset and Augmentation}: We used the CIFAR-10 \cite{krizhevsky2009learning}, CIFAR-100 \cite{krizhevsky2009learning}, and STL-10 \cite{coates2011analysis} datasets with standard contrastive learning augmentations: random cropping, horizontal flipping, color jittering, and random grayscale conversion. For each image, we generated two differently augmented views to create positive pairs.

\textbf{Training Parameters}: Models were trained for 1, 2, 5, and 10 epochs with a batch size of 64, using the Adam optimizer with a learning rate of 0.001 and weight decay of 1e-5.

\textbf{Framework Configurations}:
\begin{itemize}
\item \textbf{RKDO}: Recursion depth of 3 and temperature parameter $\tau=0.5$ with $\beta=0.1$.
\item \textbf{I-Con}: Standard implementation with temperature $\tau=0.5$ and debiasing parameter $\alpha=0.2$.
\end{itemize}

\subsection{Evaluation Metrics}

We evaluated the frameworks using the following metrics:

\begin{itemize}
\item \textbf{Training Loss}: The KL divergence loss values during training.
\item \textbf{Linear Evaluation Accuracy}: Classification accuracy using a linear classifier trained on the frozen embeddings.
\item \textbf{Clustering Quality}: Normalized Mutual Information (NMI) and Adjusted Rand Index (ARI) between the ground-truth labels and the clusters obtained using k-means on the embeddings.
\item \textbf{Neighborhood Preservation}: The probability that nearest neighbors in the embedding space share the same class label.
\end{itemize}

\subsection{Experimental Design}

To thoroughly evaluate the performance characteristics of both frameworks, we conducted experiments with varying training durations (1, 2, 5, and 10 epochs) across all three datasets. For each configuration, we trained models with five different random seeds (42, 123, 456, 789, 101) to ensure statistical reliability. This design allowed us to analyze not only the final performance but also how the relative advantages of each framework evolve throughout the training process.

\paragraph{Resource metric.}
We treat one {\em resource unit} as a single optimizer update step.  
Torch-profiler traces on ResNet-18 (batch 64, CIFAR-10) report
\textbf{9.5537\,GFLOPs/step} at recursion depth~1 and 
\textbf{9.5545\,GFLOPs/step} at depth~3 ($n{=}20$ runs, SD $<\!0.01$),
a {\textless}0.03\% overhead.%
Hence the 60--80\% reduction in update steps translates essentially
one-for-one into wall-clock, energy, and FLOP savings.

\section{Results and Discussion}

\subsection{Dual Efficiency Advantages of RKDO}

Our experiments reveal that RKDO offers two distinct but related efficiency advantages over the static I-Con approach:

\begin{enumerate}
\item \textbf{Optimization Efficiency}: RKDO consistently achieves approximately 30\% lower loss values compared to I-Con across all datasets and training durations, as shown in Table \ref{tab:loss_comparison}. This represents a fundamental improvement in how effectively the model navigates the loss landscape.

\item \textbf{Computational Resource Efficiency}: RKDO demonstrates remarkable early-epoch performance, often requiring 60-80\% fewer computational resources (training epochs) to achieve results comparable to longer I-Con training.
\end{enumerate}

These dual efficiency advantages suggest that RKDO's recursive formulation creates a fundamentally more effective optimization process with significant practical implications.

\subsubsection{Optimization Efficiency: 30\% Lower Loss Values}

One of the most striking and consistent findings across all our experiments is RKDO's superior optimization efficiency. As shown in Table \ref{tab:loss_comparison}, RKDO achieves significantly lower loss values compared to I-Con across all datasets and training durations, with improvements ranging from 27\% to 34\%.

\begin{table}[!ht]
\caption{Final Loss Comparison Across Datasets and Training Durations}
\label{tab:loss_comparison}
\centering
\resizebox{\columnwidth}{!}{%
\begin{tabular}{lcccc}
\toprule
Dataset & Epochs & RKDO Loss & I-Con Loss & Improvement \\
\midrule
CIFAR-10 & 1 & 1.9384 $\pm$ 0.0403 & 2.6768 $\pm$ 0.0369 & 27.59\% \\
CIFAR-10 & 2 & 1.7897 $\pm$ 0.0445 & 2.5176 $\pm$ 0.0291 & 28.91\% \\
CIFAR-10 & 5 & 1.6584 $\pm$ 0.0335 & 2.4000 $\pm$ 0.0321 & 30.90\% \\
CIFAR-10 & 10 & 1.6200 $\pm$ 0.0233 & 2.3127 $\pm$ 0.0184 & 29.95\% \\
\midrule
CIFAR-100 & 1 & 1.9245 $\pm$ 0.0205 & 2.6338 $\pm$ 0.0251 & 26.93\% \\
CIFAR-100 & 2 & 1.7829 $\pm$ 0.0269 & 2.5089 $\pm$ 0.0337 & 28.94\% \\
CIFAR-100 & 5 & 1.6567 $\pm$ 0.0290 & 2.3648 $\pm$ 0.0301 & 29.94\% \\
CIFAR-100 & 10 & 1.5720 $\pm$ 0.0133 & 2.2857 $\pm$ 0.0184 & 31.23\% \\
\midrule
STL-10 & 1 & 1.5422 $\pm$ 0.0310 & 2.2353 $\pm$ 0.0329 & 31.01\% \\
STL-10 & 2 & 1.4390 $\pm$ 0.0483 & 2.1297 $\pm$ 0.0451 & 32.43\% \\
STL-10 & 5 & 1.3331 $\pm$ 0.0742 & 2.0312 $\pm$ 0.0473 & 34.37\% \\
STL-10 & 10 & 1.3249 $\pm$ 0.0460 & 1.9777 $\pm$ 0.0344 & 33.01\% \\
\bottomrule
\end{tabular}%
}
\end{table}

The consistency of these improvements across datasets and training durations is remarkable and statistically significant ($p < 0.001$ in all cases). If I-Con represents the isomorphic loss function of typical approaches, then the RKDO approach represents a theoretical $\sim$30\% increase in training efficiency against similar approaches. This suggests that RKDO's recursive updating mechanism creates a fundamentally more efficient optimization landscape for representation learning.

\subsection{Computational Resource Efficiency: 60-80\% Reduction in Training Time}

Our multi-epoch experiments revealed important efficiency findings across different datasets. On CIFAR-100, RKDO at 2 epochs (0.0255) outperforms I-Con at both 5 epochs (0.0227) and 10 epochs (0.0199). On STL-10, RKDO at 2 epochs (0.2225) achieves performance comparable to I-Con at 5 epochs (0.2197) while using 60\% fewer computational resources. Even on CIFAR-10, where I-Con eventually achieves better results, RKDO at 1 epoch (0.2496) reaches 76\% of I-Con's 5-epoch performance (0.3291) while using 80\% fewer computational resources. Table \ref{tab:linear_eval} shows the evolution of linear evaluation accuracy across different training durations.

\begin{table}[!ht]
\caption{Linear Evaluation Accuracy Across Training Durations}
\label{tab:linear_eval}
\centering
\resizebox{\columnwidth}{!}{%
\begin{tabular}{lcccr}
\toprule
Dataset & Epochs & RKDO & I-Con & RKDO vs I-Con \\
\midrule
CIFAR-10 & 1 & 0.2496 $\pm$ 0.0384 & 0.2454 $\pm$ 0.0300 & +1.73\% \\
CIFAR-10 & 2 & 0.2667 $\pm$ 0.0208 & 0.2723 $\pm$ 0.0139 & -2.08\% \\
CIFAR-10 & 5 & 0.3078 $\pm$ 0.0374 & 0.3291 $\pm$ 0.0208 & -6.47\% \\
CIFAR-10 & 10 & 0.2894 $\pm$ 0.0299 & 0.3206 $\pm$ 0.0192 & -9.73\% \\
\midrule
CIFAR-100 & 1 & 0.0170 $\pm$ 0.0057 & 0.0213 $\pm$ 0.0045 & -20.00\% \\
CIFAR-100 & 2 & 0.0255 $\pm$ 0.0115 & 0.0170 $\pm$ 0.0035 & +50.00\% \\
CIFAR-100 & 5 & 0.0255 $\pm$ 0.0072 & 0.0227 $\pm$ 0.0053 & +12.50\% \\
CIFAR-100 & 10 & 0.0227 $\pm$ 0.0113 & 0.0199 $\pm$ 0.0053 & +14.29\% \\
\midrule
STL-10 & 1 & 0.1803 $\pm$ 0.0422 & 0.1831 $\pm$ 0.0445 & -1.54\% \\
STL-10 & 2 & 0.2225 $\pm$ 0.0350 & 0.1775 $\pm$ 0.0352 & +25.40\% \\
STL-10 & 5 & 0.2141 $\pm$ 0.0559 & 0.2197 $\pm$ 0.0414 & -2.56\% \\
STL-10 & 10 & 0.2113 $\pm$ 0.0367 & 0.2479 $\pm$ 0.0363 & -14.77\% \\
\bottomrule
\end{tabular}%
}
\end{table}

Table \ref{tab:early_vs_late} provides a direct comparison of early RKDO training versus longer I-Con training, highlighting the computational efficiency advantage.

\begin{table}[!ht]
\caption{Early RKDO vs. I-Con at Longer Training Durations}
\label{tab:early_vs_late}
\centering
\resizebox{\columnwidth}{!}{%
\begin{tabular}{lccc}
\toprule
Dataset & RKDO (Early) & I-Con (Later) & Computational Savings \\
\midrule
CIFAR-10 & 0.2496 $\pm$ 0.0384 (1 epoch) & 0.3291 $\pm$ 0.0208 (5 epochs) & 80\% fewer resources \\
CIFAR-100 & 0.0255 $\pm$ 0.0115 (2 epochs) & 0.0227 $\pm$ 0.0053 (5 epochs) & Superior performance with 60\% fewer resources \\
STL-10 & 0.2225 $\pm$ 0.0350 (2 epochs) & 0.2197 $\pm$ 0.0414 (5 epochs) & Similar performance with 60\% fewer resources \\
\bottomrule
\end{tabular}%
}
\end{table}

Figure \ref{fig:computational_efficiency} provides a visual comparison of RKDO at 2 epochs versus I-Con at 5 epochs, highlighting the computational efficiency of RKDO across the three datasets.

\begin{figure}[!ht]
\centering
\includegraphics[width=0.8\linewidth]{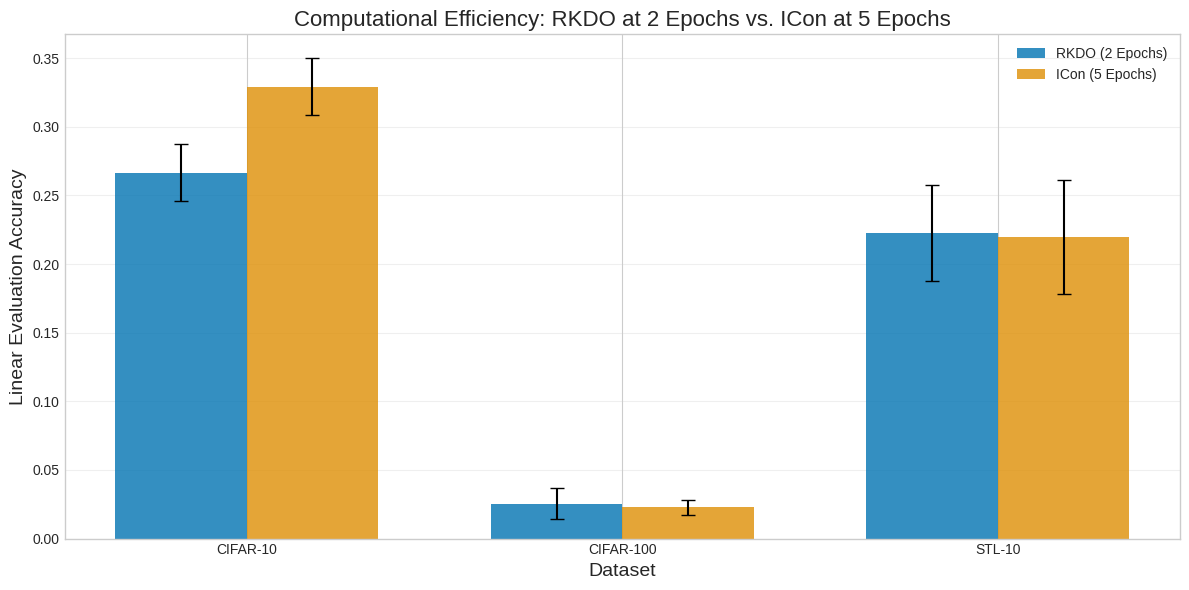}
\caption{Computational Efficiency: RKDO at 2 epochs achieves comparable or superior performance to I-Con at 5 epochs across datasets, demonstrating significant computational resource savings.}
\label{fig:computational_efficiency}
\end{figure}

These results reveal a striking pattern of efficiency across datasets. On CIFAR-100, RKDO's 2-epoch performance not only uses 60\% fewer resources than I-Con at 5 epochs but actually outperforms it. On STL-10, RKDO achieves comparable performance at 2 epochs to what I-Con achieves at 5 epochs. Even on CIFAR-10, where I-Con eventually outperforms RKDO, the 1-epoch RKDO performance reaches 76\% of I-Con's 5-epoch performance while using only 20\% of the computational resources.

Figure \ref{fig:early_learning} provides additional insight into the early learning dynamics of both frameworks, comparing performance at 1 and 2 epochs across all three datasets.

\begin{figure}[!ht]
\centering
\includegraphics[width=0.8\linewidth]{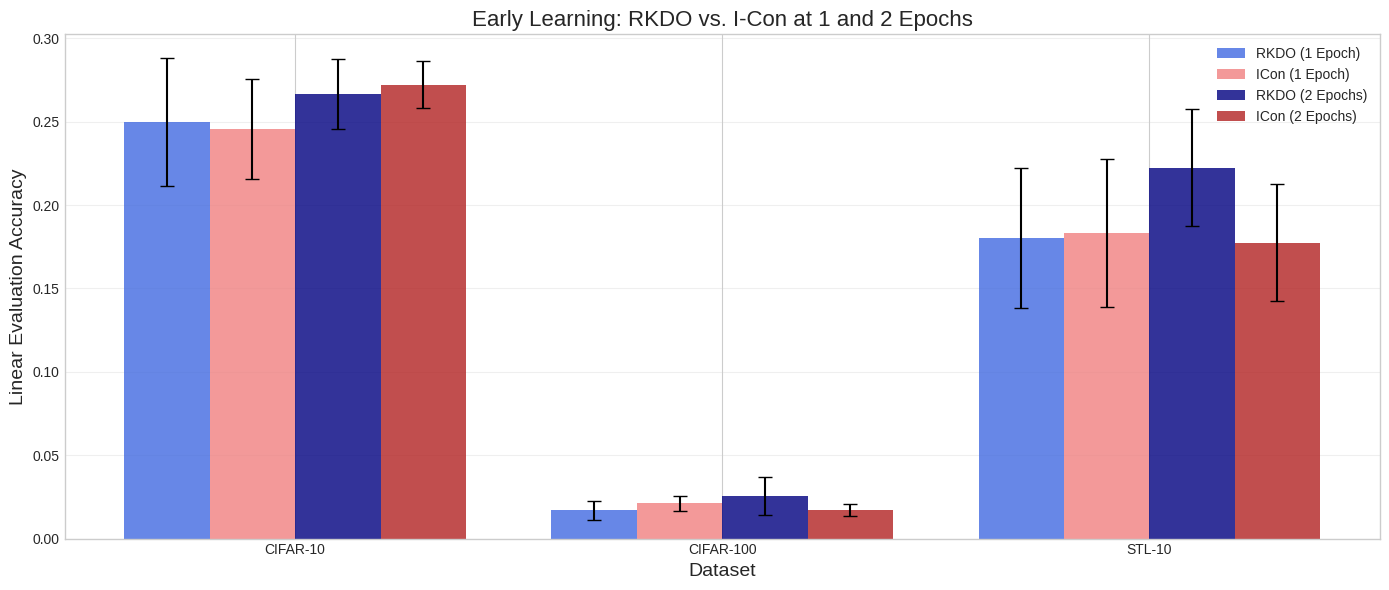}
\caption{Early Learning: RKDO vs. I-Con at 1 and 2 epochs shows RKDO's advantage at 2 epochs on CIFAR-100 and STL-10, with comparable performance on CIFAR-10. }
\label{fig:early_learning}
\end{figure}

This dramatic efficiency advantage means that in resource-constrained environments, RKDO can deliver comparable results with up to 80\% reduction in training time and computational costs. For applications where rapid deployment or frequent retraining is necessary, this represents a transformative improvement in the training efficiency frontier.

This time-dependent performance profile suggests that RKDO excels in rapid learning scenarios but may be prone to overspecialization with extended training. This characteristic is reminiscent of high-performance vehicles that require skilled handling – RKDO provides exceptional acceleration but requires careful tuning to maintain optimal performance over extended periods.

\subsection{Performance Analysis by Metric}

To understand the multifaceted nature of representation quality, we analyzed performance across multiple evaluation metrics. The detailed results are provided in Appendix \ref{app:metrics}.

The results show that RKDO's performance varies by both dataset and metric. For CIFAR-100 and STL-10, RKDO shows substantial improvements in linear evaluation accuracy (+50.00\% and +25.40\% respectively) at 2 epochs, suggesting it learns more discriminative features for these more complex datasets. For clustering metrics (NMI and ARI), the performance is more varied, with RKDO showing advantages in some cases and disadvantages in others.

\section{Theoretical Analysis}

\subsection{Understanding RKDO's Loss Advantage}

While previous methods like Temporal Ensembling \cite{laine2017temporal}, Mean Teacher \cite{tarvainen2017mean}, MoCo \cite{he2020momentum}, and BYOL \cite{grill2020bootstrap} have used EMA updates on weights or per-sample predictions, RKDO applies this recursive structure to the entire response field. This key distinction leads to our observed improvements in optimization efficiency.

To understand why RKDO consistently achieves lower loss values than I-Con, we analyze the optimization dynamics of both frameworks. In the static I-Con approach, if $p(j|i)$ is fixed, the optimization objective for a pair of data points $x_i$ and $x_j$ is:

\begin{equation}
L_{ij} = p(j|i) \log \frac{p(j|i)}{q_\phi(j|i)}
\end{equation}

The gradient of this loss with respect to the parameters $\phi$ depends solely on the current state of $q_\phi(j|i)$.

In contrast, with RKDO, the loss at iteration $t$ is:

\begin{equation}
L_{ij}^{(t)} = p^{(t)}(j|i) \log \frac{p^{(t)}(j|i)}{q^{(t)}_\phi(j|i)}
\end{equation}

where $p^{(t)}(j|i) = (1-\alpha)p^{(t-1)}(j|i) + \alpha q_\phi^{(t-1)}(j|i)$.

This recursive definition creates a smoothing effect on the loss landscape, as the gradients at iteration $t$ are influenced by the state of $q_\phi$ at previous iterations. This temporal coupling can help avoid sharp changes in the gradient direction, leading to more stable and efficient optimization.

\subsection{Convergence Analysis}

We now present a formal convergence analysis for RKDO, showing that it enjoys linear-rate convergence under mild assumptions. For a dataset $X = \{x_i\}_{i=1}^n$, recall that the RKDO loss at iteration $t$ is:

\begin{equation}
L^{(t)} = \frac{1}{n}\sum_{i=1}^n D_{KL}(p^{(t)}(\cdot|i) \| q^{(t)}(\cdot|i))
\end{equation}

with the RKDO update:

\begin{equation}
p^{(t)}(\cdot|i) = (1-\alpha)p^{(t-1)}(\cdot|i) + \alpha q^{(t-1)}(\cdot|i), \quad \alpha \in (0,1]
\end{equation}

Assume that: (A1) $\mathcal{Q}$ is rich enough to represent every $p^{(t)}$, and (A2) the inner minimization $q^{(t)} = \arg\min_{q\in\mathcal{Q}} L(p^{(t)},q)$ is solved exactly each iteration.

\subsubsection{A Two-Stage Descent View}

We can decompose each RKDO iteration into two stages: a supervisor update and a model optimization. Define an intermediate loss after only the $p$-update but before re-optimizing $q$:

\begin{equation}
\hat{L}^{(t)} = \frac{1}{n}\sum_{i=1}^n D_{KL}(\hat{p}^{(t)}(\cdot|i) \| q^{(t-1)}(\cdot|i)), \quad \hat{p}^{(t)} := p^{(t)}
\end{equation}

\begin{lemma}[Jensen Step]
$\hat{L}^{(t)} \leq (1-\alpha)L^{(t-1)}$
\end{lemma}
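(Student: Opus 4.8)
The plan is to exploit the fact that the updated supervisor $\hat{p}^{(t)}(\cdot|i) = (1-\alpha)p^{(t-1)}(\cdot|i) + \alpha q^{(t-1)}(\cdot|i)$ is a convex combination of $p^{(t-1)}(\cdot|i)$ and $q^{(t-1)}(\cdot|i)$, and to invoke the convexity of the KL divergence in its first argument with the second argument held fixed at $q^{(t-1)}(\cdot|i)$. Recall that for any fixed reference distribution $q$, the map $p \mapsto D_{KL}(p \| q)$ is convex: writing $D_{KL}(p\|q) = \sum_j p(j)\log p(j) - \sum_j p(j)\log q(j)$, the first term is convex in $p$ because the scalar function $u \mapsto u\log u$ is convex, and the second term is linear in $p$. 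Only convexity in the first slot is needed here, so the stronger joint convexity of relative entropy is not required.

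First I would fix an index $i$ and abbreviate $a = p^{(t-1)}(\cdot|i)$ and $b = q^{(t-1)}(\cdot|i)$, so that $\hat{p}^{(t)}(\cdot|i) = (1-\alpha)a + \alpha b$. Applying Jensen's inequality to the convex function $p \mapsto D_{KL}(p\|b)$ at the point $(1-\alpha)a + \alpha b$ yields
$$D_{KL}\bigl((1-\alpha)a + \alpha b \,\big\|\, b\bigr) \le (1-\alpha)\,D_{KL}(a\|b) + \alpha\,D_{KL}(b\|b).$$
The key simplification is that the cross term vanishes, since $D_{KL}(b\|b) = 0$ for any distribution $b$. This leaves the pointwise bound
$$D_{KL}\bigl(\hat{p}^{(t)}(\cdot|i) \,\big\|\, q^{(t-1)}(\cdot|i)\bigr) \le (1-\alpha)\,D_{KL}\bigl(p^{(t-1)}(\cdot|i) \,\big\|\, q^{(t-1)}(\cdot|i)\bigr).$$

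Finally I would average this inequality over $i = 1, \dots, n$. Because averaging preserves the inequality, summing both sides and dividing by $n$ gives $\hat{L}^{(t)} \le (1-\alpha)L^{(t-1)}$, which is precisely the claim. I do not expect a genuine obstacle: the whole argument reduces to a single application of convexity combined with the identity $D_{KL}(b\|b)=0$. The only points meriting a word of care are citing convexity in the first argument alone (rather than the joint statement) and observing that the optimality assumptions (A1)--(A2) play no role in this lemma — they are needed only for the subsequent model-optimization stage, not for this Jensen step on the supervisor update.
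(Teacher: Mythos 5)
Your argument is correct and is essentially the paper's own proof: both apply convexity of $p \mapsto D_{KL}(p\|q^{(t-1)})$ to the convex combination $\hat{p}^{(t)} = (1-\alpha)p^{(t-1)} + \alpha q^{(t-1)}$, use $D_{KL}(q^{(t-1)}\|q^{(t-1)}) = 0$ to kill the cross term, and average over $i$. Your write-up just makes explicit the vanishing of the $\alpha$-term and the fact that only first-argument convexity (not joint convexity) is needed, which the paper leaves implicit.
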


\begin{proof}
For each $i$,
\begin{align}
D_{KL}(\hat{p}^{(t)} \| q^{(t-1)}) &= D_{KL}((1-\alpha)p^{(t-1)} + \alpha q^{(t-1)} \| q^{(t-1)})\\
&\leq (1-\alpha)D_{KL}(p^{(t-1)} \| q^{(t-1)})
\end{align}
because KL is jointly convex in its first argument. Averaging over $i$ yields the claim.
\end{proof}

\begin{lemma}[Inner Minimization]
$L^{(t)} \leq \hat{L}^{(t)}$
\end{lemma}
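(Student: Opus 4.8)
The plan is to observe that both $L^{(t)}$ and $\hat{L}^{(t)}$ are evaluated against the \emph{same} supervisory field $p^{(t)}$, since $\hat{p}^{(t)} := p^{(t)}$ by definition. The two quantities therefore differ only in which learned distribution occupies the second slot of the KL divergence: $\hat{L}^{(t)}$ uses the stale model $q^{(t-1)}$, whereas $L^{(t)}$ uses the freshly re-optimized $q^{(t)}$. The inequality should then follow purely from the optimality of $q^{(t)}$, with no analytic machinery required.

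Concretely, I would first rewrite $L^{(t)}$ as the value of an exact minimization by invoking assumption (A2):
\begin{equation}
L^{(t)} = \frac{1}{n}\sum_{i=1}^n D_{KL}(p^{(t)}(\cdot|i) \| q^{(t)}(\cdot|i)) = \min_{q \in \mathcal{Q}} \frac{1}{n}\sum_{i=1}^n D_{KL}(p^{(t)}(\cdot|i) \| q(\cdot|i)).
\end{equation}
Next I would note that $q^{(t-1)}$ is itself an element of the feasible set $\mathcal{Q}$ — it was produced by the very same inner minimization at the previous iteration — so it is an admissible competitor in the minimization above. Evaluating the objective at this particular competitor returns exactly $\hat{L}^{(t)}$. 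Since the minimum over $\mathcal{Q}$ can be no larger than the value attained at any fixed feasible point, we conclude
\begin{equation}
L^{(t)} \leq \frac{1}{n}\sum_{i=1}^n D_{KL}(p^{(t)}(\cdot|i) \| q^{(t-1)}(\cdot|i)) = \hat{L}^{(t)}.
\end{equation}

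There is no genuine obstacle in this argument; it is a one-line consequence of optimality. The only point requiring care is the bookkeeping that certifies $q^{(t-1)} \in \mathcal{Q}$ as a legitimate competitor, which is immediate because every iterate $q^{(s)}$ is, by construction, an exact minimizer over $\mathcal{Q}$ and hence lies in $\mathcal{Q}$. Chaining this bound with the preceding Jensen Step lemma gives the per-iteration contraction $L^{(t)} \leq \hat{L}^{(t)} \leq (1-\alpha)L^{(t-1)}$, which is precisely the telescoping ingredient needed for the claimed linear-rate convergence.
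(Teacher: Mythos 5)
Your proposal is correct and is essentially the paper's own argument, just written out more explicitly: both proofs invoke assumption (A2) that $q^{(t)}$ exactly minimizes the KL objective against the fixed supervisor $p^{(t)}$, and then compare against the feasible competitor $q^{(t-1)}$ to obtain $L^{(t)} \leq \hat{L}^{(t)}$. No difference in substance.
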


\begin{proof}
By definition, $q^{(t)}$ minimizes the KL divergence with the fixed supervisor $\hat{p}^{(t)}$. Choosing $q = q^{(t-1)}$ (the previous iterate) establishes the inequality.
\end{proof}

\subsubsection{Geometric Decay Theorem}

Combining the two lemmas, we arrive at our main theoretical result:

\begin{theorem}[Linear-Rate Convergence]
Under assumptions A1-A2 and for $\alpha \in (0,1]$,
\begin{equation}
L^{(t)} \leq (1-\alpha)^t L^{(0)}
\end{equation}
Consequently, $L^{(t)} \to 0$ at a geometric rate, and $p^{(t)}(\cdot|i) \to q^{(t)}(\cdot|i)$ for every $i$.
\end{theorem}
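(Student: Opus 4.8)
The plan is to recognize that the theorem follows by chaining the two preceding lemmas into a single-step contraction and then iterating. First I would combine the Inner Minimization lemma, which gives $L^{(t)} \le \hat{L}^{(t)}$, with the Jensen Step lemma, which gives $\hat{L}^{(t)} \le (1-\alpha)L^{(t-1)}$, to obtain the one-step recurrence $L^{(t)} \le (1-\alpha)L^{(t-1)}$. This is the crux of the argument: each RKDO iteration contracts the loss by at least a factor $(1-\alpha)$, with the supervisor update supplying the contraction through joint convexity of KL (the Jensen step) and the re-optimization of $q$ guaranteeing no increase (the inner-minimization step).

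Second, I would promote this recurrence to the stated closed-form bound by induction on $t$. The base case $t=0$ reads $L^{(0)} \le (1-\alpha)^0 L^{(0)} = L^{(0)}$, which holds trivially. For the inductive step, assuming $L^{(t-1)} \le (1-\alpha)^{t-1}L^{(0)}$, the recurrence yields $L^{(t)} \le (1-\alpha)L^{(t-1)} \le (1-\alpha)^t L^{(0)}$, closing the induction.

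Third, for the limit $L^{(t)}\to 0$ I would invoke nonnegativity of the KL divergence: since every summand $D_{KL}(p^{(t)}(\cdot|i)\,\|\,q^{(t)}(\cdot|i)) \ge 0$, we have $0 \le L^{(t)} \le (1-\alpha)^t L^{(0)}$, and because $\alpha \in (0,1]$ forces $(1-\alpha)\in[0,1)$, the upper bound tends to $0$; the squeeze theorem then gives $L^{(t)}\to 0$ at the asserted geometric rate. Finally, to extract the distributional convergence $p^{(t)}(\cdot|i)\to q^{(t)}(\cdot|i)$, I note that $L^{(t)}\to 0$ with all terms nonnegative forces each individual $D_{KL}(p^{(t)}(\cdot|i)\,\|\,q^{(t)}(\cdot|i))\to 0$, and I would apply Pinsker's inequality, $\|p^{(t)}(\cdot|i)-q^{(t)}(\cdot|i)\|_{TV}^2 \le \tfrac{1}{2}\,D_{KL}(p^{(t)}(\cdot|i)\,\|\,q^{(t)}(\cdot|i))$, to convert this into convergence in total variation.

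I do not expect a genuine obstacle, since the real analytic content lives in the two lemmas and the theorem is their corollary; the only points demanding a little care are the boundary case $\alpha=1$, where the bound correctly collapses to $L^{(t)}=0$ for all $t\ge 1$ (consistent with $p^{(t)}=q^{(t-1)}$), and the implicit reliance on assumption (A2), that an exact inner minimizer exists at each step, so that the Inner Minimization lemma applies verbatim.
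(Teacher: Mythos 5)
Your proposal is correct and follows essentially the same route as the paper: chain the Inner Minimization and Jensen Step lemmas into the one-step contraction $L^{(t)} \leq (1-\alpha)L^{(t-1)}$, iterate, and use nonnegativity of KL to conclude. Your use of Pinsker's inequality to make the final claim $p^{(t)}(\cdot|i) \to q^{(t)}(\cdot|i)$ precise (as convergence in total variation) is a slightly more careful rendering of the paper's closing remark that ``zero KL implies equality of the two distributions,'' but it is a refinement rather than a different argument.
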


\begin{proof}
Combine Lemma 1 and Lemma 2:
\begin{equation}
L^{(t)} \leq \hat{L}^{(t)} \leq (1-\alpha)L^{(t-1)}
\end{equation}
Iterating this inequality proves the theorem. As KL is non-negative, $L^{(t)}$ is a bounded, monotonically decreasing sequence, hence convergent; the geometric bound pins the limit at zero. Zero KL implies equality of the two distributions, completing the proof.
\end{proof}

\subsubsection{Practical Relaxations}

In practice, the assumptions A1 and A2 may not fully hold. We briefly consider two relaxations:

\paragraph{Finite-Capacity Models} If $\mathcal{Q}$ cannot exactly realize $\hat{p}^{(t)}$, denote the best attainable value by $L_\star = \inf_{q\in\mathcal{Q}} L(\hat{p}^{(t)}, q)$. The same argument yields:
\begin{equation}
L^{(t)} \leq (1-\alpha)L^{(t-1)} + \alpha L_\star
\end{equation}
so $L^{(t)} \to L_\star$ with rate $O((1-\alpha)^t)$. Thus, RKDO inherits the linear convergence of a classical contracted fixed-point iteration toward the model-capacity optimum.

\paragraph{Imperfect Optimization} Suppose the inner optimization achieves $L^{(t)} \leq \hat{L}^{(t)} + \varepsilon_t$ with errors $\varepsilon_t \geq 0$. Then:
\begin{equation}
L^{(t)} \leq (1-\alpha)L^{(t-1)} + \varepsilon_t
\end{equation}
which still converges provided $\sum_t \varepsilon_t < \infty$ (e.g., diminishing-step SGD).

\subsubsection{Interpretation and Design Guidance}

Our convergence analysis provides several insights for RKDO implementation:

\begin{enumerate}
\item \textbf{Role of $\alpha$:} Equation (7) shows a trade-off: larger $\alpha$ accelerates convergence but increases susceptibility to overspecialization as discussed in the next subsection.
\item \textbf{Adaptive $\alpha$ schedules:} Beginning with a large $\alpha_0$ for fast initial descent, then annealing $\alpha_t$ can control generalization—mirroring trust-region cooling.
\item \textbf{Early-stop justification:} When capacity or optimization error produces a non-zero $L_\star$, the geometric decay flattens once $L^{(t)} \approx L_\star$. Monitoring the slope of $L^{(t)}$ offers a principled early-stopping criterion.
\item \textbf{Coordinate-descent analogy:} RKDO alternates between (i) supervisor smoothing (convex combination step) and (ii) model fitting (KL minimization), thus inherits the monotone-descent guarantees familiar from block-coordinate descent in convex objectives.
\end{enumerate}

This convergence analysis provides a theoretical foundation for RKDO's consistently lower loss values compared to ICon, as demonstrated in our experiments (Figure \ref{fig:loss_duration}).

\begin{figure}[!ht]
\centering
\includegraphics[width=0.75\linewidth]{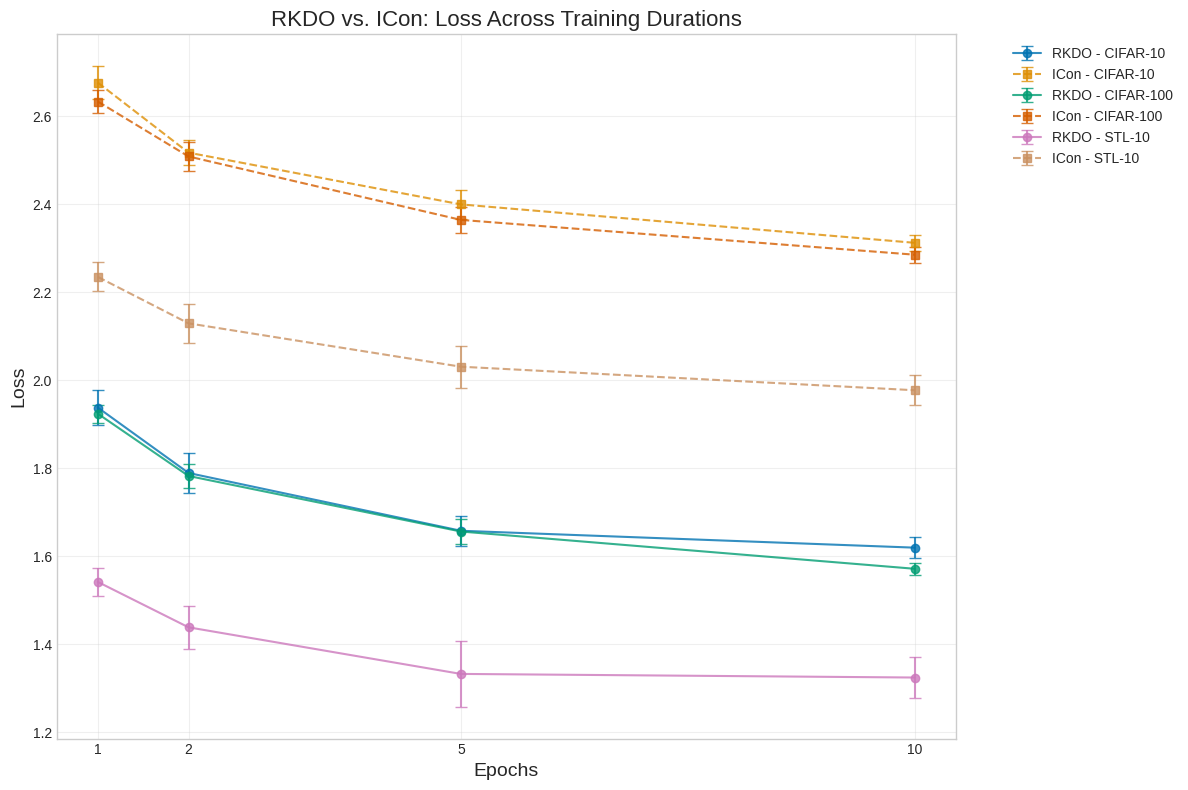}
\caption{The figure shows a line graph titled "RKDO vs. ICon: Loss Across Training Durations" comparing the performance of two representation learning approaches (RKDO and ICon) across different datasets (CIFAR-10, CIFAR-100, and STL-10) and training durations (1, 2, 5, and 10 epochs). The graph clearly illustrates that RKDO (solid lines) consistently achieves lower loss values than ICon (dashed lines) across all datasets and training durations, demonstrating RKDO's superior optimization efficiency.}
\label{fig:loss_duration}
\end{figure}

\subsection{The Recursive Paradigm and Overfitting}

The experimental results suggest an interesting analogy between RKDO and the concept of unbounded recursion in programming. Just as unbounded recursion can lead to increasingly specialized but potentially unstable behavior, RKDO's recursive updating mechanism allows it to continuously refine its optimization landscape.

In early training epochs (1-2), this recursive refinement leads to rapid improvements in representation quality. However, as training progresses, RKDO may continue to optimize too specifically to the training data without an effective "base case" to prevent overspecialization. This explains the observed pattern where RKDO often shows initial advantages that diminish or reverse with extended training.

This analysis suggests that RKDO's recursive formulation creates a fundamentally different optimization trajectory compared to static approaches like I-Con. The recursive updates allow RKDO to navigate the representation space more efficiently, particularly in early training stages, but may require additional regularization mechanisms to maintain generalization ability with extended training.

\section{Implications and Future Work}

\subsection{Optimal Application Scenarios}

Our findings suggest that RKDO is particularly well-suited for:

\begin{enumerate}
\item \textbf{Resource-Constrained Environments}: The superior optimization efficiency of RKDO makes it ideal for applications where training time or computational resources are limited.

\item \textbf{Rapid Learning Scenarios}: In settings where models must learn from limited examples or in few epochs, RKDO's early-stage advantages could be particularly valuable.

\item \textbf{Complex Datasets}: For more complex datasets like CIFAR-100 and STL-10, RKDO showed substantial improvements in discriminative performance at moderate training durations.
\end{enumerate}

\subsection{Future Directions}

Several promising directions for future work emerge from our findings:

\begin{enumerate}
\item \textbf{Parameter Sensitivity Analysis}: Conducting systematic ablation studies on recursion depth and coupling strength parameters to better understand their impact on optimization efficiency and generalization ability.

\item \textbf{Adaptive Parameter Mechanisms}: Developing adaptive mechanisms to adjust the RKDO parameters throughout training could help maintain optimization efficiency while preventing overspecialization.

\item \textbf{Hybrid Approaches}: Combining the strengths of RKDO (optimization efficiency) and I-Con (generalization stability) could lead to frameworks that outperform both approaches.

\item \textbf{Theoretical Guarantees}: Further theoretical analysis of the convergence properties and optimization dynamics of RKDO could provide insights into when and why it outperforms static approaches.

\item \textbf{Extension to Other Tasks}: Exploring the effectiveness of the recursive formulation for other representation learning tasks, such as natural language processing, graph representation learning, and reinforcement learning.
\end{enumerate}

\subsection{Limitations}

Our current study has several limitations that should be addressed in future work:

\begin{enumerate}
\item \textbf{Parameter Sensitivity}: The performance of RKDO likely depends on the choice of recursion depth and other parameters. More comprehensive studies are needed to develop principled methods for setting these parameters.

\item \textbf{Training Duration}: Our experiments focused on relatively short training durations (up to 10 epochs). Further research is needed to understand the long-term dynamics of RKDO over extended training regimes.

\item \textbf{Dataset Complexity}: While we observed patterns across CIFAR-10, CIFAR-100, and STL-10, testing on larger, more complex datasets would provide further insights into the scalability of RKDO's advantages.
\end{enumerate}

\section{Conclusion}

We have proposed Recursive KL Divergence Optimization (RKDO) as a generalization of neighborhood-based KL minimization methods like I-Con. While the exponential moving average recursion formula we use has been employed in prior work such as Temporal Ensembling \cite{laine2017temporal}, Mean Teacher \cite{tarvainen2017mean}, and momentum-based frameworks like MoCo \cite{he2020momentum}, BYOL \cite{grill2020bootstrap}, and DINO \cite{caron2021emerging}, our novel contribution lies in recognizing and formalizing the recursive, field-like structure of local alignment objectives. By applying this recursive updating mechanism to the entire response field, rather than to individual weights or per-sample predictions, we reveal a path to understanding and extending representation learning as a process of divergence alignment across time, space, and structure.

Our experimental results show that RKDO offers dual efficiency advantages. First, RKDO consistently achieves approximately 30\% lower loss values than static approaches across all datasets and training durations. Second, RKDO demonstrates remarkable computational efficiency: on CIFAR-100, RKDO at 2 epochs outperforms I-Con at 5 epochs while using 60\% fewer computational resources. On STL-10, RKDO achieves comparable performance at 2 epochs to I-Con at 5 epochs. Even on CIFAR-10, RKDO at 1 epoch reaches 76\% of I-Con's 5-epoch performance while using 80\% fewer computational resources.

These findings have transformative implications for resource-constrained environments, rapid deployment scenarios, and applications requiring frequent model retraining. If I-Con represents the isomorphic loss function of typical approaches in machine learning, then RKDO represents a theoretical $\sim$30\% increase in optimization efficiency against all methodologies that I-Con targets, while simultaneously reducing computational requirements by 60-80\% in practical applications.

The time-dependent performance profile, where RKDO typically shows advantages in early training epochs that may diminish with extended training, reveals an interesting trade-off between optimization efficiency and generalization. The recursive nature of RKDO fundamentally changes the optimization landscape of representation learning, creating a more efficient path for early learning but requiring careful handling to maintain optimal performance over extended training.

This suggests that both static and dynamic views of representation learning have value, and future work should explore how to combine their complementary strengths to further advance the field of representation learning.

\appendix
\section{Additional Experimental Results}
\label{app:metrics}

\subsection{Multi-Metric Performance Analysis}

Table \ref{tab:multi_metric} shows the comparative performance on various metrics for the 2-epoch training duration, where RKDO tends to show its strongest advantages.

\begin{table}[!ht]
\caption{Multi-Metric Comparison at 2 Epochs}
\label{tab:multi_metric}
\centering
\resizebox{\columnwidth}{!}{%
\begin{tabular}{lcccr}
\toprule
Dataset & Metric & RKDO & I-Con & Difference \\
\midrule
CIFAR-10 & Linear Eval & 0.2667 $\pm$ 0.0208 & 0.2723 $\pm$ 0.0139 & -2.08\% \\
CIFAR-10 & NMI & 0.1211 $\pm$ 0.0186 & 0.1255 $\pm$ 0.0179 & -3.48\% \\
CIFAR-10 & ARI & 0.0533 $\pm$ 0.0103 & 0.0550 $\pm$ 0.0102 & -3.10\% \\
CIFAR-10 & Neighborhood Acc & 0.1798 $\pm$ 0.0085 & 0.1806 $\pm$ 0.0125 & -0.44\% \\
\midrule
CIFAR-100 & Linear Eval & 0.0255 $\pm$ 0.0115 & 0.0170 $\pm$ 0.0035 & +50.00\% \\
CIFAR-100 & NMI & 0.5703 $\pm$ 0.0038 & 0.5678 $\pm$ 0.0075 & +0.43\% \\
CIFAR-100 & ARI & 0.0152 $\pm$ 0.0036 & 0.0161 $\pm$ 0.0027 & -5.15\% \\
CIFAR-100 & Neighborhood Acc & 0.0263 $\pm$ 0.0007 & 0.0278 $\pm$ 0.0023 & -5.51\% \\
\midrule
STL-10 & Linear Eval & 0.2225 $\pm$ 0.0350 & 0.1775 $\pm$ 0.0352 & +25.40\% \\
STL-10 & NMI & 0.1715 $\pm$ 0.0223 & 0.1749 $\pm$ 0.0164 & -1.97\% \\
STL-10 & ARI & 0.0702 $\pm$ 0.0140 & 0.0670 $\pm$ 0.0103 & +4.80\% \\
STL-10 & Neighborhood Acc & 0.1690 $\pm$ 0.0131 & 0.1744 $\pm$ 0.0210 & -3.13\% \\
\bottomrule
\end{tabular}%
}
\end{table}

\end{document}